
%
%
%
%

\RequirePackage{fix-cm}
\RequirePackage{pgfplots}
\documentclass{article}
\usepackage{PRIMEarxiv}
%
%
\usepackage{graphicx}
\usepackage{mathptmx}      
%
\usepackage{amsmath}
\usepackage{amssymb}
\usepackage{lipsum}
\usepackage{graphicx}
\usepackage{caption}
\usepackage{colortbl}
\usepackage{multirow}
\usepackage{booktabs}
\usepackage{url}
\usepackage{algpseudocode}
\usepackage{algorithm}
\usepackage{placeins}
\algrenewcommand\algorithmicrequire{\textbf{Input:}}

\newtheorem{definition}{theorem}
\newtheorem{lemma}{theorem}
\newtheorem{proof}{theorem}
\newtheorem{note}{theorem}

\pgfplotsset{width=90mm, compat=1.9}

\title{Unproportional mosaicing}


\author{Vojtech Molek, Petr Hurtik, Pavel Vlasanek, David Adamczyk \\
University of Ostrava, Institute for Research and Applications of Fuzzy Modeling \\
30. dubna 22, Ostrava, Czech Republic\\
\texttt{\{vojtech.molek, petr.hurtik, pavel.vlasanek, david.adamczyk\}@osu.cz}
}



\begin{document}
\maketitle

\begin{abstract}
Data shift is a gap between data distribution used for training and data distribution encountered in the real-world. Data augmentations help narrow the gap by generating new data samples, increasing data variability, and data space coverage. We present a new data augmentation: Unproportional mosaicing (Unprop). Our augmentation randomly splits an image into various-sized blocks and swaps its content (pixels) while maintaining block sizes. Our method achieves a lower error rate when combined with other state-of-the-art augmentations.
\end{abstract}

\keywords{image augmentation \and convolutional neural network \and image processing}

\section{Introduction}

Deep neural networks (DNNs) excel in extracting features from data and, based on them, solving a wide range of problems. Considering image processing, the tasks of classification, segmentation, or object detection attracted the most attention of researchers. DNNs are data-driven, so the number of samples in the training dataset affects the performance of a trained model. To achieve good performance and generalization of the model, the data must have a high variability to cover as much of the data space as possible. Variability manifests itself as a different lighting condition, blur, perspective change, or geometric transformation in the case of image data. Quite often, variability is not explicitly captured in the data, leading to generalization failure, overfitting, and low accuracy. The difference between training and the distribution of real-world data is called a "data shift"~\cite{quinonero2009dataset}. Achieving equality between training and the testing accuracy is important because it reduces the need for a large validation data set and thus allows the use of the data for training, leading to a possible better generalization.

The issue of data shift can be mitigated by data augmentation that produces new samples from existing ones~\cite{rajput2019does}. It has been proven~\cite{hernandez2018data} that data augmentation also performs an implicit regularization that suppresses data shift more efficiently than explicit regularization, such as weight decay~\cite{hanson1988comparing} or dropout~\cite{srivastava2014dropout}. While there are attempts to deal with the theoretical principles behind data augmentation~\cite{chen2019invariance,hernandez2019learning}, the construction of a proper data augmentation pipeline remains 'ad hoc', resulting in the continuous development of new augmentation schemes. 

In this study, we propose a novel augmentation called "\textbf{Unproportional mosaicing}" (unprop) that, instead of chaining a large number of partial augmentations, consists of a single scheme realizing image mosaicing with variable mosaic size. The effect is similar to chain augmentation of mosaicing, partial resizing, and blurring, but yields better generalization due to producing inconsistent augmentation output (see Definition~\ref{def-aug-consist}).

\noindent \textbf{The highlights:}
\begin{itemize}
    \item The way of augmentation is unique, and the construction of augmentation inconsistent image is novel.
    \item Unproportional significantly reduces the overfitting of neural networks to such a degree that the training and testing accuracies are nearly identical while the reached testing accuracy is higher than the accuracies achieved by other SOTA augmentation strategies.
    \item Unproportional is fast and thus suitable for online data augmentation.
\end{itemize}

\begin{figure}
    \centering
    \includegraphics[width=.49\linewidth]{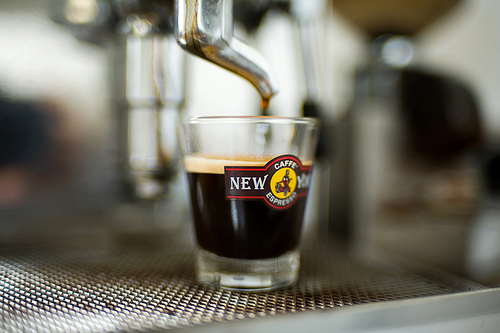}
    \includegraphics[width=.49\linewidth]{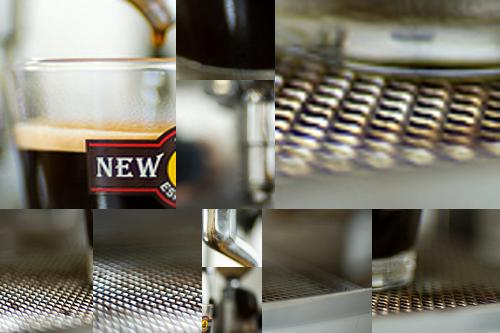}
    \caption{Original image (left) and image processed with unprop (right).}
    \label{fig-example}
\end{figure}

\section{Related work}

The obvious remedy to the data shift is the introduction of as much variability into the training data as possible. A simple way to increase the variability of the data is through augmentations. 
Among the basic augmentations are geometrical, such as translation, rotation, skewing, scaling, horizontal flipping, and cropping. Translation, horizontal flip, and random crop in particular are commonly used for natural images~\cite{krizhevsky2012imagenet,zagoruyko2016wide,he2016deep}. Augmentations that manipulate color information on the image-wide scale include (adaptive) histogram equalization, color jitter, blurring, or noise addition. These basic single-operator augmentations are the foundation stone for more advanced augmentation algorithms. Such are patch-based and image mixing algorithms.
Cutout~\cite{devries2017improved} introduced a simple yet powerful technique that replaces a single, randomly size patch in an image with a constant value. The modified version of Cutout is (coarse/grid) Dropout which replaces multiple patches. CutMix~\cite{yun2019cutmix} improves Cutout by replacing the patch with another patch taken from a random image in a dataset. This leads to utilization of information from the training dataset instead of information removal. The labels are mixed in a proportional manner. AugMix~\cite{hendrycks2019augmix} is designed to make deployed models more robust to unforeseen image corruptions and data shifts. The algorithm randomly assembles augmentation chains of variable lengths that are applied to a data sample. The resulting augmented images are input to the weighted sum that is combined with the original data sample. \cite{lopes2019improving}, similarly to CutMix, modifies Cutout. The selected patch is not replaced but has Gaussian noise added to it. Gaussian patch augmentation provides better robustness to corruptions while keeping the accuracy unaffected. SaliencyMix~\cite{uddin2020saliencymix} utilizes salient information to select the dominant patch in a data sample. The patch is then used to replace a patch in a different random data sample. ResizeMix~\cite{qin2020resizemix} is a similar approach to CutMix, however, instead of replacing the source image patch with a random patch, it uses a whole random, resized data sample. The authors make the argument that salient information used by some augmentations for guidance is redundant and outperformed by their re-scaling -- pasting into a random position algorithm.

MixUp~\cite{zhang2017mixup} is a mixing approach that linearly combines two data samples as well as the labels. The ratio between mixed samples is controlled by the $\lambda$ parameter sampled from Beta distribution. Mixup reduces generalization error, memorization of corrupted labels, and improves GAN training~\cite{goodfellow2014generative}. Puzzle Mix~\cite{kim2020puzzle} combines two images and their labels to create a new data sample. The method preserves a local image statistic and at the same time maximizes the amount of saliency information exposed in the resulting data sample. SuperMix~\cite{dabouei2021supermix} generalizes mixing up to $k$ images using supervised learning to preserve saliency information.

AutoAugment~\cite{cubuk2019autoaugment} introduced a different approach to augmentation. Instead of utilizing a single augmentation operator. It searches for an optimal set of augmentation sub-policies and the augmentation magnitudes for a given dataset, using reinforcement learning. The disadvantage of AutoAugment, its search time complexity, was reduced by Fast AutoAugment (Fast AA)~\cite{lim2019fast} which improved the search strategy through density matching. \cite{tian2020improving} took AutoAugment and implemented weight sharing known from Network Architecture Search (NAS). Weight sharing makes iterating during the searching more economical, and thus enables evaluation on larger datasets. PBA~\cite{ho2019population} changes the augmentation search completely, and rather than searching for a fixed augmentation policy, it searches for the augmentation policy schedule, which is much less expensive. RandAugment~\cite{cubuk2020randaugment} removed the AutoAugment search phase together. Instead, it picks $N$ augmentations with magnitude $M$ from a set possible of possible augmentations.

\section{Unproportional augmentation}

\subsection{Preliminaries}

We suppose an image $F \subset \mathbb{R}^{w\times h}$. Furthermore, let $a:F\times\theta\to F$ be an augmentation function of an image $F$ according to the augmentation parameters $\theta$. Standard spatial and intensity augmentations such as rotation, scaling, intensity change, etc., are applied over the whole image $F$ with the same parameters. Such an augmentation creates new, augmentation consistent data samples (see Definition~\ref{def-aug-consist}). The augmentations can be chained as $a_{i+1}(a_i(F,\theta_i), \theta_{i+1})$, but the result will remain augmentation consistent.

\begin{definition}
\label{def-aug-consist}
An image $F$ is augmentation consistent with respect to intensity augmentation function $a$  iff for arbitrary $p_1, p_2 \in F$, $F(p_1)  \leq F(p_2)$ holds $a(F,\theta)(p_1)  \leq a(F, \theta)(p_2)$ and consistent with respect to geometric augmentation function $a$ iff for arbitrary $R_1, R_2 \subseteq$ F, $\lvert R_1 \rvert \leq \lvert R_2 \rvert$ holds $ \lvert a(R_1, \theta) \rvert \leq \lvert a(R_2, \theta) \rvert$.
\end{definition}

 When a model is trained, a set of augmentations with probability $0<p\leq1$ is applied. A large enough batch of images ensures the stability of the training process by including all types of augmentations. The problem may arise when there are many augmentations involved, as a large batch requires large GPU memory. On the other hand, augmentation inconsistent images can reduce the need for larger batch size because the augmentations are not applied over the whole image but over its parts. Based on this fact, the primary motivation of our work is to create an augmentation inconsistent technique.

\subsection{Unproportional augmentation}

We propose to combine augmentations of aspect ratio, blurring, sharpening, mosaicing, and resizing in a single image. Firstly, we partition the input image into subareas and then swap the content between the subareas.

\textit{Partitioning:}
Let $\mathbf{R} = \{R_1, ..., R_n\}, R_i \subseteq F, n \geq 2$  be a set of rectangular areas satisfying the following: 

$$F = \bigcup_{\forall R\in \mathbf{R}}R,$$ $$\forall R_{i}, R_{j}: R_i\cap R_{j} = \emptyset.$$
We start with $\mathbf{R} = \{R_1 = F\}$. Then take repeatedly random $R_i \in \mathbf{R}$ and split into $R_{i, 1}, R_{i,2}$ such that $R_{i,1} \cap R_{i,2} = \emptyset$, $R_{i,1} \cup R_{i,2} = R_i$, $|R_{i,1}| \neq |R_{i,2}|$ and replace by them the original $R_i$.

\textit{Augmentation:}
We take random $i, j$ and swap the content of $R_i, R_j$, i.e., $R_i, R_j = R_j, R_i$. Because $(|R_i|\leq|R_j|) \vee (|R_j|\leq|R_i|)$, we must realize $$R_i, R_j = a(R_i, \theta_1), a(R_j, \theta_2),$$ where $a$ is a resize function. Let $R_i^w$ and $R_i^h$ be the width and height of $R_i$. Then $\theta_1 = \theta_2$ iff $(R_i^w=R_j^w) \wedge (R_i^h=R_j^h)$. A presence of $\theta_i \neq \theta_j$ makes Unproportional to be augmentation inconsistent.

The mentioned list of augmentations is involved by swapping the content for mosaicing augmentation and by realizing resizing of the content for the rest of the augmentations. The resize itself is clear, aspect ratio jittering is a side-product of resize when $(R_i^w \neq R_j^w) \vee (R_i^h \neq R_j^h)$ and the blurring/sharpening is realized by the resize function as well. We use bicubic interpolation with sharpening that interpolates values (add low frequencies) between known nodes taking into account edges (add high frequencies). 

\begin{lemma}
Unproportional is not augmentation consistent.
\end{lemma}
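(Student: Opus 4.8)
The plan is to exhibit a single counterexample that violates the definition of augmentation consistency from Definition~\ref{def-aug-consist}. Since consistency (for the geometric case) requires that for \emph{arbitrary} subareas $R_1, R_2 \subseteq F$ with $\lvert R_1 \rvert \leq \lvert R_2 \rvert$ one has $\lvert a(R_1,\theta) \rvert \leq \lvert a(R_2,\theta) \rvert$, it suffices to produce one partition and one swap for which this ordering is reversed. First I would invoke the partitioning step: starting from $\mathbf{R} = \{F\}$, apply one split so that $\mathbf{R} = \{R_1, R_2\}$ with $R_1 \cup R_2 = F$, $R_1 \cap R_2 = \emptyset$, and crucially $\lvert R_1 \rvert \neq \lvert R_2 \rvert$ — say $\lvert R_1 \rvert < \lvert R_2 \rvert$ — which the partitioning rule explicitly guarantees.

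Next I would apply the augmentation step to indices $i=1$, $j=2$: Unproportional swaps the contents, which by the description forces the resize maps $R_1 \mapsto a(R_1,\theta_1)$ and $R_2 \mapsto a(R_2,\theta_2)$ where $a(R_1,\theta_1)$ now occupies the footprint of the old $R_2$ and vice versa. Hence $\lvert a(R_1,\theta_1) \rvert = \lvert R_2 \rvert$ and $\lvert a(R_2,\theta_2) \rvert = \lvert R_1 \rvert$. Since we chose $\lvert R_1 \rvert < \lvert R_2 \rvert$, we get $\lvert a(R_1,\theta_1) \rvert > \lvert a(R_2,\theta_2) \rvert$, i.e. the size ordering is strictly reversed. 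Moreover, because $\lvert R_1 \rvert \neq \lvert R_2 \rvert$ the two regions cannot have simultaneously equal widths and equal heights, so by the stated criterion $\theta_1 \neq \theta_2$; this is the feature the paper already flagged as the source of inconsistency, and it is exactly what lets the same augmentation act with different parameters on the two regions. I would then note that the geometric-consistency clause of Definition~\ref{def-aug-consist} is therefore false for this $F$, so Unproportional is not augmentation consistent.

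The only genuinely delicate point — and the place I would spend the most care — is matching the counterexample to the precise quantifier structure of Definition~\ref{def-aug-consist}. The definition speaks of a \emph{single} function $a$ applied with a \emph{single} parameter $\theta$ to arbitrary subregions, whereas Unproportional is described as applying $a$ with region-dependent parameters $\theta_1, \theta_2$. I would resolve this by treating Unproportional's action on $F$ as the composite map $A:F\to F$ it induces, and observing that for consistency to hold one would need the two subregions $R_1,R_2$ to be rescaled monotonically in size; the swap makes this impossible whenever the partition is non-trivial, which the construction always produces (the split condition $\lvert R_{i,1}\rvert \neq \lvert R_{i,2}\rvert$ and $n\geq 2$ guarantee at least two regions of unequal size). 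A secondary, much lighter, obstacle is the edge case where a random swap picks $i=j$ or picks two regions that happen to be congruent; I would simply remark that consistency must hold for \emph{all} choices, so it is enough that \emph{some} admissible run of the procedure breaks it, and the unequal-size split guarantees such a run exists.
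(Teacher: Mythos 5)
Your argument is correct at the level of rigor the paper itself operates at, but it is a genuinely different proof from the one in the paper. You argue by direct counterexample against the geometric clause of Definition~\ref{def-aug-consist}: the swap sends the content of the smaller region onto the larger footprint and vice versa, so $\lvert a(R_1,\theta_1)\rvert = \lvert R_2\rvert > \lvert R_1\rvert = \lvert a(R_2,\theta_2)\rvert$ and the size ordering is strictly reversed. The paper instead runs a parity argument on widths: pad $F$ so that $F^w$ is odd, observe that any split with $R_1^w \neq R_2^w$ produces one odd and one even width, and that re-splitting the odd-width block preserves this, so there \emph{always} exist rectangles with $R_i^w \neq R_j^w$ and hence $\theta_i \neq \theta_j$. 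The trade-off is exactly the one you flag at the end: your existential argument (``some admissible run breaks consistency'') is more elementary and attacks the definition's inequality head-on, but it only shows that an inconsistent output can occur; the paper's parity trick is engineered to guarantee that differing resize parameters arise in \emph{every} run, which is the stronger reading of the lemma, at the cost of the artificial padding and strict-inequality assumptions that the paper's own subsequent Note concedes are ``strong.'' Two further points in your favour: you explicitly confront the quantifier mismatch between the single-$\theta$ definition and the two-parameter action of Unprop, which the paper silently elides, and you handle the edge case where the randomly swapped pair happens to be congruent, which the paper's proof also leaves implicit (its parity argument guarantees \emph{some} pair of differing widths exists, not that the swapped pair differs).
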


\begin{proof}
Let $F^w$ be a width of $F$ and be even or odd. When it is even, we add one empty column, so $F^w$ is always odd. Because we divide it into $R_1, R_2$ such that $R_1^w \neq R_2^w$, one width must be odd and one even. By dividing the block with odd width, we again obtain odd and even widths. Thus, there always exists $R_i^w \neq R_j^w$, i.e., $\theta_i \neq \theta_j$, i.e., the output is augmentation inconsistent.
\end{proof}

\begin{note}
The proof is based on strong assumptions of side size being odd and the non-equal division. In practice, it is not necessary to secure the augmentation to be \textit{always} augmentation inconsistent as the property of \textit{in most cases} augmentation inconsistent is enough. The justification is that even in the cases of standard augmentations, these are applied with a probability rarely equal to one, i.e., always.
\end{note}

\subsection{The algorithm}

The basic idea is based on an unproportional division. The original grid-based division~\cite{info11020125} separates input image to mutually exclusive segments with the same width and height. Our approach modifies this idea by extension of the possible segment shapes from uniform rectangles to rectangles of arbitrary size. A comparison of both divisions can be seen in Fig.~\ref{fig:unprop_vs_grid_shuffle}.

\begin{figure}
    \centering
    \includegraphics[width=.49\linewidth]{src/imgs/imagenet_espresso_unprop.JPEG}
    \includegraphics[width=.49\linewidth]{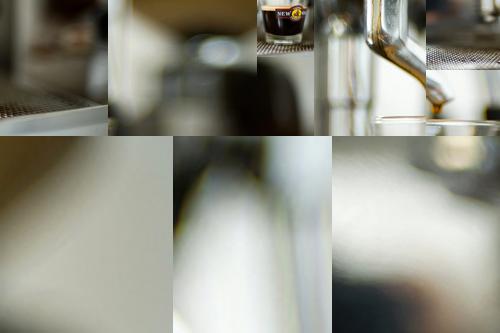}
    \caption{Image processed with unprop (left) and image processed with grid shuffle (right).}
    \label{fig:unprop_vs_grid_shuffle}
\end{figure}

The unprop algorithm works in the following way: In the first step, the input image is set as an active rectangle. The random number $\sim\mathcal{U}(0,1)$ is generated to choose whenever the rectangle should be split horizontally or vertically at a random position. The two resulting rectangles replace the original one. The process continues by randomly selecting a rectangle from all available rectangles and splitting it. After reaching the desired number of rectangles, \textit{refinement} phase takes a place. All rectangles are checked against the defined aspect ratio. The rectangles which do not satisfy the requirements are split again. Finally, the content of rectangles is shuffled around and resized such that it fits new rectangles. The algorithm is parametrized by the number of rectangles and their aspect ratio. The pseudo-code of the algorithm can be seen in Algorithm~\ref{alg:cap}.

\begin{algorithm}
    \caption{Unproportional mosaicing augmentation pseudo code}\label{alg:cap}
    \begin{algorithmic}[1]
    \Require image $F$, image width $w$, image height $h$, aspect ratio $G$, target number of rectangles $N$, number of steps $J$ and probability $P$
    \If{$P<P'\sim \mathcal{U}(0,1)$}
        \State \Return $F$
    \EndIf
    \State $\mathbf{B} \gets \{\{0,0,w,h\}\}$
    \While{$ \lvert \mathbf{B} \rvert < N$}
        \State random select rectangle $B \in \mathbf{B}$
        \State random select split $S \in \{\text{vertical, horizontal\}}$
        \If{$S \text{ is horizontal}$}
            \State sample $T \sim \mathcal{U}(0,B_2-B_0)$
            \State $B' \gets \{B_0, B_1, T, B_3\}$
            \State $B'' \gets \{T, B_1, B_2-T, B_3)$
        \Else
            \State sample $T \sim \mathcal{U}(0,B_3-B_1)$
            \State $B' \gets \{B_0, B_1, B_2, T\}$
            \State $B'' \gets \{B_0, T, B_2, B_3-T\}$
        \EndIf
        \State $\mathbf{B} \gets \mathbf{B} \backslash B$
        \State $\mathbf{B} \gets \mathbf{B} \cup B' \cup B''$
    \EndWhile
    \State Refine rectangles to get to the aspect ratio $G$, with $J$ steps at most
    \State Permute image patches according to $\mathbf{B}$
    \end{algorithmic}
    \label{alg}
\end{algorithm}

\section{Experiments}
This section presents our main results. To evaluate unprop performance, we set up 3 different scenarios: image classification using Fast AutoAugment~\cite{lim2019fast} (Fast AA) augmentation pipeline, image classification with isolated augmentations, and hyperparameter search.

All experiments were performed utilizing a single DGX V100 GPU. Reproducible experiments are available in the repository \url{https://gitlab.com/irafm-ai/unproportional}.

\textbf{Datasets.} Datasets used for experiments are CIFAR-10, CIFAR-100, CIFAR-100-LT~\cite{krizhevsky2009learning}, Caltech 101~\cite{FeiFei2004LearningGV}, Food 101~\cite{bossard14}, and ImageNet-LT~\cite{openlongtailrecognition}. CIFAR-100-LT and ImageNet-LT are modified versions of training sets where the samples per class follow the long-tailed distribution; see Fig.~\ref{cifar100lt}. The test sets is kept as is.

\begin{figure}
    \centering
    \begin{tikzpicture}[scale=.94]
    \begin{axis} [
    ybar,
    height=35mm,
    bar width=2pt,
    xmin = 0,
    xmax = 100,
    ymin = 0,
    ymax = 500,
    tick pos=left,
    xlabel = {Class index},
    ylabel = {Samples}
    ]
        \addplot coordinates {
        (0,500) (1,477) (2,455) (3,434) (4,415) (5,396) (6,378) (7,361) (8,344) (9,328) (10,314) (11,299) (12,286) (13,273) (14,260) (15,248) (16,237) (17,226) (18,216) (19,206) (20,197) (21,188) (22,179) (23,171) (24,163) (25,156) (26,149) (27,142) (28,135) (29,129) (30,123) (31,118) (32,112) (33,107) (34,102) (35,98) (36,93) (37,89) (38,85) (39,81) (40,77) (41,74) (42,70) (43,67) (44,64) (45,61) (46,58) (47,56) (48,53) (49,51) (50,48) (51,46) (52,44) (53,42) (54,40) (55,38) (56,36) (57,35) (58,33) (59,32) (60,30) (61,29) (62,27) (63,26) (64,25) (65,24) (66,23) (67,22) (68,21) (69,20) (70,19) (71,18) (72,17) (73,16) (74,15) (75,15) (76,14) (77,13) (78,13) (79,12) (80,12) (81,11) (82,11) (83,10) (84,10) (85,9) (86,9) (87,8) (88,8) (89,7) (90,7) (91,7) (92,6) (93,6) (94,6) (95,6) (96,5) (97,5) (98,5) (99,5)
        };
        \draw[domain=0:100, smooth, variable=\x, black, thick] plot ({\x}, {500*0.01^(\x/99)});
    \end{axis}
\end{tikzpicture}
    \caption{CIFAR-100-LT number of samples per class distribution starting at 500 samples for class 0 and ending at 5 samples for class 99. Black line is smooth distribution function $f(x) = 500 \cdot 0.01^{x/(100-1)}$.}
    \label{cifar100lt}
\end{figure}
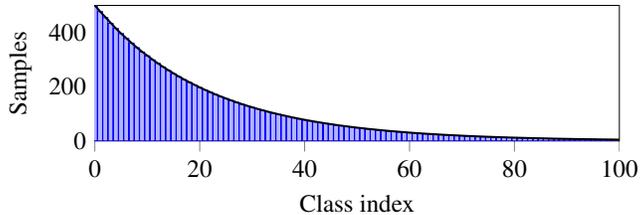

\textbf{Models.} For our Fast AA experiments, we train several versions of Wide-ResNet~\cite{zagoruyko2016wide}, Shake-Shake~\cite{gastaldi2017shake} and Shake-Drop~\cite{yamada2019shakedrop}, following Fast AutoAugment setting. The isolated augmentations are tested using EffecientNetB2V2~\cite{tan2021efficientnetv2}. Hyperparameters search is done using Wide-ResNet.

\textbf{fast AA training setting.} Fast AutoAugment defines several experimental settings\footnote{The definitions are in configuration files and we refer the reader to our repository.}, each including training hyperparameters, model, set of augmentations, and dataset. First, we rerun defined experiments to verify the published results and obtain baseline performances. Table~\ref{table-res} contains experiment reruns in the column marked with star ($\star$). Subsequently, we create a copy of each setting with unprop added. The experiments augmentation pipelines consist of 3 stages:
\begin{enumerate}
    \item Common augmentations for a given dataset: crop, horizontal flip and normalization,
    \item (Optional) unproportional mosaicing,
    \item Augmentation policy found by Fast AutoAugment.
\end{enumerate}

\textbf{Isolated augmentations training setting.} We train the model for 100 epochs on one of the three datasets: Caltech 101, Food 101, and ImageNet-LT. We use the following augmentations: random crop, affine transformations, Cutout, HSV transformation, Gauss patch mix~\cite{}, Augmix, and Unprop, one at the time. Albumentations is used for the first four augmentations. Gauss patch mix and Augmix implementations are taken from their respective repositories.

\subsection{Hyperparameter search}
In order to achieve optimal unprop performance, we search for a good parameter configuration using Optuna. As mentioned in Algorithm~\ref{alg}, unprop is parametrized by 4 parameters: target aspect ratio $G$, the final number of rectangles $N$, number of refining steps $J$, and application probability $P$. We use Wide-ResNet-40-2 and CIFAR-10 with a 4:1 train-validation split ratio. Train split is reduced to 30\% due to train time complexity. We run 100 optimization trials.

\definecolor{Gray}{gray}{0.95}
\begin{table}
    \centering
    \begin{tabular}{l | c | c || c | c}
        \toprule
        \textbf{Parameter}  & \textbf{Value} & \textbf{Imp} & \textbf{Value}$^\dagger$ & \textbf{Imp}$^\dagger$\\

        \midrule
        \rowcolor{Gray}\multicolumn{5}{c}{\textbf{Image classification}} \\
        \midrule
        
        Aspect ratio    & 0.92      & 0.03 & 0.75   & 0.47 \\
        Rectangles      & 6         & 0.02 & 8      & 0.43 \\
        Ref. steps      & 7         & 0.02 & 6      & 0.10 \\
        App. prob. P    & 0.0007    & 0.93 & 0.10   & ---  \\
        
        \bottomrule      
    \end{tabular}
    \caption{Optimal unprop parameter values. Imp column contains the importance of unprop parameters w.r.t to objective (loss) function. The column with $\dagger$ symbol has probability $P$ excluded from the optimization process and is fixed to $P=0.1$.}
    \label{table-hpp}
\end{table}

Table~\ref{table-hpp} contains found optimal parameters and their importance to the objective function. It is clear that application probability $P$ has a major impact on an objective function. In the column Imp$^\dagger$ we exclude $P$ from the search and fix it to $P=0.1$ which has heuristically shown as a good value.

For our experiments, we have chosen a different set of hyperparameters out of best trials, such that $P\approx 0.1$ to have sufficient unprop application rate. $P\approx 0.1$ was selected from finished trials that are among the top. The chosen parameters are aspect ratio: 1.18, rectangles: 5, refinement steps: 7, and $P=0.1$.

\textbf{Unprop time complexity.} Because $P$ governs how often is unprop augmentation applied, we hypothesize that training time correlates with probability $P$. We run same experiment as described for Table~\ref{table-hpp} with $P \in \{\frac{i}{10} | i=0, \dots, 10\}$.

\begin{figure}
    \centering
    \begin{tikzpicture}[scale=.94]
 \begin{axis} [
    ybar,
    height=45mm,
    ymin = -0.2,
    ymax = 0.4,
    bar width=15pt,
    tick pos=left,
    xlabel = {$P$},
    ylabel = {Time difference [\%]}
]
\addplot coordinates {
    (0.0, 0.2)
    (0.1, 0.38)
    (0.2, 0.36)
    (0.3, 0.25)
    (0.4, 0.19)
    (0.5, 0.12)
    (0.6, 0.14)
    (0.7, -0.05)
    (0.8, -0.1)
    (0.9, -0.06)
    (1.0, -0.08)
};
\end{axis}
\end{tikzpicture}
    \caption{Training time change with increasing $P$. Baseline training time is measured without unprop.}
    \label{fig:timep_corr}
\end{figure}
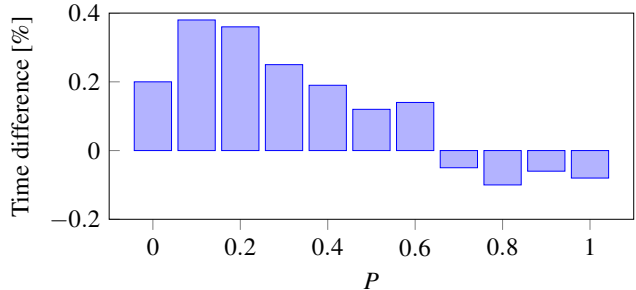

The graph in Figure~\ref{fig:timep_corr} invalidates this hypothesis as the time difference is in $[-0.2\%, 0.4\%]$ interval. The baseline measure is without unprop. Note that the pipeline without unprop is not equal to unprop with $P=0$ because the latter has to call unprop function and sample random value.

Since the training procedure is a complex process we test the hypothesis in an isolated environment, measuring the execution time of unprop on a single image with $512 \times 512$ resolution.

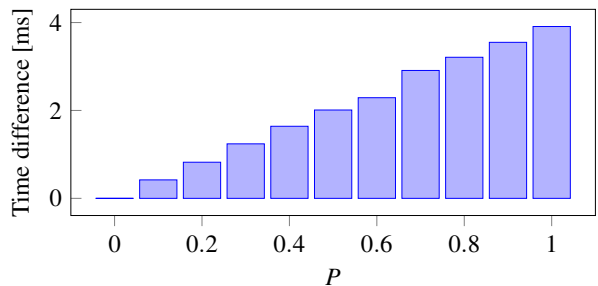
\begin{figure}
    \centering
    \begin{tikzpicture}[scale=.94]
 \begin{axis} [
    ybar,
    height=45mm,
    bar width=15pt,
    tick pos=left,
    xlabel = {$P$},
    ylabel = {Time difference [ms]}
]
\addplot coordinates {
    (0, 301/1000000)
    (0.1, 420/1000)
    (0.2, 822/1000)
    (0.3, 1.24)
    (0.4, 1.64)
    (0.5, 2.01)
    (0.6, 2.29)
    (0.7, 2.91)
    (0.8, 3.21)
    (0.9, 3.55)
    (1.0, 3.91)
};
\end{axis}
\end{tikzpicture}
    \caption{Unprop execution time with increasing $P$.}
    \label{fig:timep_corr_unprop}
\end{figure}

\definecolor{Gray}{gray}{0.95}
\begin{table*}[ht]
    \centering
    \begin{tabular}{l | c c c|| c c c}
        \toprule
        \multirow{2}{*}{\textbf{Model}}  & \textbf{Fast AA} & \textbf{Fast AA}$^\star$ & $\Delta$     & \textbf{Unprop}                       & $\Delta$        & \boldmath$P$ \\
                                         & [\%]             & [\%]                     & [\%]         & [\%]                                  & [\%]            &              \\
        \midrule
        \rowcolor{Gray}\multicolumn{7}{c}{\textbf{CIFAR-10}} \\
        \midrule
        Wide-ResNet-40-2                 & 3.70             & \textbf{3.35}            & -0.9         & 3.56\color{blue}{$_{(0.21\uparrow)}$} & \textbf{-4.96}  & 0.10         \\
        Wide-ResNet-28-10                & 2.70             & 2.86                     & 0.63         & \textbf{2.46}$_{(0.40\downarrow)}$    & \textbf{-4.46}  & 0.14         \\
        Shake-Shake(26 2$\times$32d)     & 2.50             & \textbf{2.77}            & -1.61        & 2.79\color{blue}{$_{(0.02\uparrow)}$} & \textbf{-8.76}  & 0.10         \\
        Shake-Shake(26 2$\times$96d)     & 2.00             & 2.18                     & 0.71         & \textbf{2.08}$_{(0.10\downarrow)}$    & \textbf{-1.01}  & 0.05         \\
        Shake-Shake(26 2$\times$112d)    & 1.90             & 2.02                     & 0.20         & 2.11\color{blue}{$_{(0.09\uparrow)}$} & \textbf{-2.76}  & 0.10         \\
        PyramidNet+ShakeDrop             & 1.70             & 1.63                     & 0.37         & \textbf{1.54}$_{(0.09\downarrow)}$    & \textbf{-2.63}  & 0.10         \\
        
        \midrule
        \rowcolor{Gray}\multicolumn{7}{c}{\textbf{CIFAR-100}} \\
        \midrule
        Wide-ResNet-40-2                 & 20.60            & 21.26                    & 8.75         & \textbf{21.02}$_{(0.24\downarrow)}$   & \textbf{-3.32}  & 0.20         \\
        Wide-ResNet-28-10                & 17.30            & 17.27                    & 14.23        & \textbf{16.55}$_{(0.72\downarrow)}$   & \textbf{2.97}   & 0.20         \\
        Shake-Shake(26 2$\times$32d)     & ---              & 17.43                    & 0.48         & 17.97\color{blue}{$_{(0.54\uparrow)}$}& \textbf{-5.88}  & 0.10         \\
        Shake-Shake(26 2$\times$96d)     & 14.60            & 15.72                    & 13.06        & \textbf{14.89}$_{(0.83\downarrow)}$   & \textbf{7.28}   & 0.10         \\
        Shake-Shake(26 2$\times$112d)    & ---              & 15.62                    & 11.85        & \textbf{15.24}$_{(0.38\downarrow)}$   & \textbf{5.80}   & 0.10         \\
        PyramidNet+ShakeDrop             & 11.70            & 12.13                    & 8.51         & \textbf{11.95}$_{(0.18\downarrow)}$   & \textbf{3.76}   & 0.10         \\
        
        \midrule
        \rowcolor{Gray}\multicolumn{7}{c}{\textbf{CIFAR-100-LT}} \\
        \midrule
        Wide-ResNet-40-2                 & ---              & 53.13                    & 41.04        & \textbf{52.21}$_{(0.92\downarrow)}$   & \textbf{32.00}  & 0.20         \\
        Wide-ResNet-28-10                & ---              & 51.59                    & 45.52        & \textbf{49.5}$_{(2.09\downarrow)}$    & \textbf{39.76}  & 0.10         \\
        Shake-Shake(26 2$\times$32d)     & ---              & 51.63                    & 41.84        & \textbf{50.76}$_{(0.87\downarrow)}$   & \textbf{32.50}  & 0.20         \\
        Shake-Shake(26 2$\times$96d)     & ---              & 52.76                    & 43.53        & 53.32\color{blue}{$_{(0.56\uparrow)}$}& \textbf{34.28}  & 0.10         \\
        Shake-Shake(26 2$\times$112d)    & ---              & 52.54                    & 39.94        & \textbf{51.09}$_{(1.45\downarrow)}$   & \textbf{32.98}  & 0.10         \\
        PyramidNet+ShakeDrop             & ---              & 47.96                    & 45.64        & \textbf{47.77}$_{(0.19\downarrow)}$   & \textbf{39.36}  & 0.10         \\
        \bottomrule
        
    \end{tabular}
    \caption{Top-1 error rate; lower is better. Fast AA values are from \cite{lim2019fast} Table 3. Fast AA$^\star$ are rerun experiments using the Fast AutoAugment repository. $P$ is application probability of unproportional mosaicing on each data sample, $\Delta$ is a difference between testing and training error rate. In the table we report \textbf{the best} achieved accuracy.}
    \label{table-res}
\end{table*}

\subsection{Fast AA image classification}
Results of Fast AA trained on CIFAR are shown in Table~\ref{table-res}. Unproportional mosaicing in combination with Fast AA outperformed Fast AA almost in all cases and by as much as $0.83\%$ in the case of Shake-Shake(26 2x96d) on CIFAR-100 and $2.09\%$ in the case of Wide-ResNet-28-10 on the more challenging CIFAR-100-LT. In the few instances where Fast AA performed better, the differences are generally smaller than in the instances where unprop is better. The unprop ability to lower the overfitting ($\Delta$ column in Table~\ref{table-res}) is apparent on CIFAR-10, where all experimental runs with unprop led to underfit (negative $\Delta$ numbers). In general, all runs with unprop had lower overfit than without it while maintaining the same testing accuracy.

\subsection{Image classification with isolated augmentations}
We compare Unprop augmentation with other basic and more complex augmentation. Each experimental run features cropping and a single augmentation applied using one of the three datasets. This allows us to compare Unprop on one to one basis. 

\definecolor{Gray}{gray}{0.95}
\begin{table*}[h]
    \centering
    \begin{tabular}{l | c c c}
        \toprule
        \multirow{2}{*}{\textbf{Augmentation}}  & \textbf{Train error} & \textbf{Test error}       & $\Delta$          \\
                                                & [\%]                    & [\%]                   & [\%]              \\
        \midrule
        \rowcolor{Gray}\multicolumn{4}{c}{\textbf{Caltech 101}} \\
        \midrule
        Crop                                    & 0.00                    & \textbf{8.81}          & \textbf{8.81}    \\
        Affine                                  & 0.03                    & 11.65                  & 11.62            \\
        HSV                                     & 0.03                    & 8.94                   & 8.91             \\
        Gauss patch mix                         & 0.16                    & 9.53                   & 9.37             \\
        Cutout                                  & 0.07                    & 9.50                   & 9.43             \\
        Augmix                                  & 0.10                    & 8.91                   & \textbf{8.81}    \\
        Unprop                                  & 0.62                    & 11.77                  & 11.15            \\
        
        \midrule
        \rowcolor{Gray}\multicolumn{4}{c}{\textbf{Food 101}} \\
        \midrule
        Crop                                    & 0.59                    & 12.14                  & 11.55            \\
        Affine                                  & 1.52                    & 12.17                  & 10.65            \\
        HSV                                     & 0.97                    & 12.42                  & 11.45            \\
        Gauss patch mix                         & 1.75                    & 11.82                  & 10.07            \\
        Cutout                                  & 1.00                    & 11.84                  & 10.83            \\
        Augmix                                  & 2.01                    & 12.28                  & 10.28            \\
        Unprop                                  & 2.78                    & \textbf{11.59}         & \textbf{8.81}    \\
        
        \midrule
        \rowcolor{Gray}\multicolumn{4}{c}{\textbf{ImageNet-LT}} \\
        \midrule
        Crop                                    & 0.74                    & 39.06                  & 38.33            \\
        Affine                                  & 1.74                    & 41.30                  & 39.56            \\
        HSV                                     & 0.84                    & \textbf{38.72}         & 37.88            \\
        Gauss patch mix                         & 1.65                    & 39.28                  & 37.63            \\
        Cutout                                  & 1.01                    & 39.49                  & 38.48            \\
        Augmix                                  & 1.61                    & 38.87                  & 37.26            \\
        unprop                                  & 2.98                    & 39.25                  & \textbf{36.27}   \\
        \bottomrule
        
    \end{tabular}
    \caption{Results of training EfficientNetB2V2 on three different datasets, using individual augmentations. In the table we report \textbf{the best} achieved accuracy.}
    \label{table-res-indiv}
\end{table*}

Table~\ref{table-res-indiv} shows mixed results of unprop in both test error rate and ovefit reduction. From the results we can conclude that unprop benefits from being applied in tandem with other augmentations inside of more complex augmentation pipelines.

\section{Conclusions}
This study introduced a novel augmentation method called unproportinal mosaicing. Unprop functions as multiple augmentations combined together (scale, aspect ratio deformation, and mosaicing). The experiments have shown several benefits. Unprop time complexity is negligible and can be easily integrated into existing pipelines. It increases the accuracy of existing pipelines and decreases overfitting. Reduced overfitting is especially important when dealing with a smaller amount of data and gives practitioners a better idea of behavior of their model during deployment. 

\section*{Acknowledgments}
The work was supported from ERDF/ESF "Centre for the development of Artificial Intelligence Methods for the Automotive Industry of the region" (No. CZ.02.1.01/0.0/0.0/17\_049/0008414)

\FloatBarrier

\bibliographystyle{unsrt} 
\bibliography{bibliography.bib}

\end{document}